%% file: main_arxiv.tex
\ifdefined\pdfoutput\pdfoutput=1\fi
\documentclass[10pt]{article}
\usepackage[utf8]{inputenc}
\usepackage[T1]{fontenc}
\usepackage{lmodern}
\usepackage[letterpaper,margin=1in]{geometry}
\usepackage{natbib}
\usepackage{microtype}

\input{math_commands.tex}

\usepackage{hyperref}
\usepackage{url}
\usepackage{enumitem}
\usepackage{amsmath,amssymb,amsthm}
\theoremstyle{plain}

\newtheorem{theorem}{Theorem}
\newtheorem{lemma}{Lemma}

\newtheorem{definition}{Definition}
\theoremstyle{remark}

\usepackage{etoc}
\usepackage{graphicx}
\usepackage{booktabs}
\usepackage{float}
\usepackage{enumitem}
\usepackage[most]{tcolorbox}
\tcbuselibrary{breakable}

\definecolor{boxOrange}{RGB}{204,122,0}
\newtcolorbox{defbox}[1]{
  enhanced, breakable,
  colframe=boxOrange, boxrule=0.35pt, arc=1mm,
  title={\textbf{#1}}, coltitle=black,
  fonttitle=\small\sffamily\bfseries,
  colbacktitle=boxOrange!20!white,
  colback=boxOrange!6!white,
  boxed title style={sharp corners, boxrule=0.35pt, colframe=boxOrange,
    top=1pt, bottom=0.5pt, left=4mm, right=4mm},
  attach boxed title to top left={xshift=4mm, yshift*=-1.2mm},
  boxsep=1.5mm, top=1.5mm, bottom=1.5mm, left=2.5mm, right=3mm,
  before skip=8pt, after skip=8pt
}

\definecolor{boxBlue}{RGB}{0,110,180}
\newtcolorbox{objbox}[1]{
  enhanced, breakable,
  colframe=boxBlue, boxrule=0.35pt, arc=1mm,
  title={\textbf{#1}}, coltitle=black,
  fonttitle=\small\sffamily\bfseries,
  colbacktitle=boxBlue!20!white,
  colback=boxBlue!6!white,
  boxed title style={sharp corners, boxrule=0.35pt, colframe=boxBlue,
    top=1pt, bottom=0.5pt, left=4mm, right=4mm},
  attach boxed title to top left={xshift=4mm, yshift*=-1.2mm},
  boxsep=1.5mm, top=1.5mm, bottom=1.5mm, left=2.5mm, right=3mm,
  before skip=8pt, after skip=8pt
}

\definecolor{boxPurple}{RGB}{126,63,171}
\newtcolorbox{ttlbox}[2]{
  enhanced, breakable,
  colframe=#1, boxrule=0.35pt, arc=1mm,
  title={\textbf{#2}}, coltitle=black,
  fonttitle=\small\sffamily\bfseries,
  colbacktitle=#1!20!white,
  colback=#1!6!white,
  boxed title style={sharp corners, boxrule=0.35pt, colframe=#1,
    top=1pt, bottom=0.5pt, left=4mm, right=4mm},
  attach boxed title to top left={xshift=4mm, yshift*=-1.2mm},
  boxsep=1.5mm, top=1.5mm, bottom=1.5mm, left=2.5mm, right=3mm,
  before skip=8pt, after skip=8pt
}

\definecolor{boxGreen}{RGB}{0,140,90}
\newtcolorbox{takeawaybox}[1]{
  enhanced, breakable,
  colframe=boxGreen, boxrule=0.35pt, arc=1mm,
  title={\textbf{#1}}, coltitle=black,
  fonttitle=\small\sffamily\bfseries,
  colbacktitle=boxGreen!20!white,
  colback=boxGreen!6!white,
  boxed title style={sharp corners, boxrule=0.35pt, colframe=boxGreen,
    top=1pt, bottom=0.5pt, left=4mm, right=4mm},
  attach boxed title to top left={xshift=4mm, yshift*=-1.2mm},
  boxsep=1.5mm, top=1.5mm, bottom=1.5mm, left=2.5mm, right=3mm,
  before skip=8pt, after skip=8pt
}

\newtcolorbox{questionbox}{
  enhanced, breakable,
  colframe=boxGreen, boxrule=0.35pt, arc=1mm,
  colback=boxGreen!6!white,
  boxsep=1.5mm, top=1.8mm, bottom=1.8mm, left=2.5mm, right=3mm,
  before skip=9pt, after skip=9pt
}

\definecolor{boxPink}{RGB}{199,62,125}

\usepackage{capt-of}
\usepackage{subcaption}
\definecolor{boxTeal}{RGB}{0,128,128}
\newtcolorbox{tablebox}[1]{
  enhanced, hbox,
  title={#1},
  colback=boxTeal!5, colframe=boxTeal, coltext=black, coltitle=white,
  fonttitle=\small\bfseries, arc=1mm, boxrule=1pt,
  fontupper=\small,
  boxsep=1pt, left=2pt, right=2pt, top=2pt, bottom=2pt,
  toptitle=1pt, bottomtitle=1pt,
  before skip=6pt, after skip=4pt
}

\usepackage{tikz}
\usetikzlibrary{positioning,arrows.meta,decorations.pathreplacing,calc}

\definecolor{linkblue}{RGB}{0,51,128}
\hypersetup{
  colorlinks = true,
  linkcolor  = linkblue,
  citecolor  = linkblue,
  urlcolor   = linkblue,
  filecolor  = linkblue
}

\newcommand{\AC}{\mathsf{AC}}
\newcommand{\TC}{\mathsf{TC}}

\newcommand{\NL}{\mathsf{NL}}

\newcommand{\poly}{\mathrm{poly}}

\newcommand{\CoT}{\mathrm{CoT}}
\newcommand{\STCON}{\textsc{STCON}}

\newcommand{\modelc}{base model}
\newcommand{\BT}{\texttt{<BT>}}

\hypersetup{pdftitle={Understanding the Benefits of Scaffolds through Computational Separations},
            pdfauthor={Jingchu Gai, Adam Block, Andrej Risteski}}
\AddToHook{env/figure/begin}{\RenewDocumentCommand{\vspace}{s m}{}}
\title{Understanding the Benefits of Scaffolds\\ through Computational Separations}

\author{%
  \setlength{\tabcolsep}{9pt}%
  \begin{tabular}[t]{@{}ccc@{}}
    Jingchu Gai & Adam Block & Andrej Risteski \\
    \normalsize\begin{tabular}[t]{@{}c@{}}Machine Learning Department\\ Carnegie Mellon University\end{tabular} &
    \normalsize\begin{tabular}[t]{@{}c@{}}Departments of Computer Science\\ and Electrical Engineering\\ Columbia University\end{tabular} &
    \normalsize\begin{tabular}[t]{@{}c@{}}Machine Learning Department\\ Carnegie Mellon University\end{tabular} \\
    \normalsize\texttt{jgai@cs.cmu.edu} & \normalsize\texttt{adam.block@columbia.edu} & \normalsize\texttt{aristesk@cs.cmu.edu}
  \end{tabular}%
}
\date{}

\begin{document}

\maketitle

\begin{abstract}
Inference-time harnesses, i.e., external procedures that call a model many times and keep state across
calls, can substantially improve model capabilities. In this paper, we investigate the benefits of using such harnesses and the extent to which these benefits can be internalized through training.  More precisely, we conceptualize a harness $\mathcal{A}$ as an arbitrary wrapper around a base model $M$ and attempt to distill a student model from the harness's traces.  Empirically we find that, even with simple scaffolds and small models, the inference-time harness significantly outperforms a distilled student.  We then prove that such a gap is due to a fundamental representational limitation: inference-time harnesses can compute a closure of their per-call primitive that is
$\NL$-complete even when the primitive is a trivial $\AC^0$ lookup, and that no fixed-depth
forward pass can represent under the widely believed conjecture that $\NL\not\subseteq\TC^0$.  Our theory suggests that expanding the representational capacity of the student relative to the task at hand, e.g., by enlarging the model, making the task easier, or lengthening its chain-of-thought, is necessary to close the gap, which we verify empirically. Finally, we scale to an agentic harness on competition math, where the same gap reappears. Overall, our results show that a harness confers on the base model a profound gain in representational power, one that training a same-class student on its traces cannot transfer back into the weights.

\end{abstract}

\etocdepthtag.toc{mainmatter}

\section{Introduction}

Much of the recent progress on challenging reasoning and agentic tasks depends on the workflow surrounding a language model. An \emph{inference-time harness} (or \emph{scaffold}) keeps the underlying model fixed, but calls it repeatedly, maintaining state and choosing later queries based on earlier answers. Harnesses range from sampling and selecting complete solutions \citep{wang2023selfconsistency,cobbe2021verifiers} to searching over partial solutions \citep{yao2023tot}, backtracking with a verifier \citep{rohatgi2025vgb}, coordinating agents \citep{du2023debate}, and incorporating tools and environment feedback \citep{yao2023react}.

There is something puzzling about how elementary some of these strategies are---and why they should improve the performance of already strong models. For example, early systems achieving IMO gold-level performance \citep{huang2025winning}  and autonomous mathematics research systems like Aletheia \citep{feng2026autonomous} use designed, but simple loops of maintaining a candidate set of solutions, expanding them, and pruning them by verification occasionally. Why should a model capable of sophisticated mathematical reasoning benefit so much from having such a simple strategy imposed from outside?

Moreover, considering that simplicity of design is often desirable, the fact that carefully designed harnesses are needed to achieve strong performance raises the question of whether one can expect a model to learn such strategies on its own, thereby internalizing the benefits of a harness.
Indeed, \citet{cundy2024sequencematch} train models to perform backtracking themselves and \citet{gandhi2024sos} to search by learning from serialized search traces. It is unclear, however, when one can expect this internalization to be successful and when a practitioner would be better off simply using a harness at inference-time.

There is likely more than one reason why a harness can improve performance: for example, it may increase the effective capacity of the underlying model, provide a form of error correction, or provide implicit decomposition of the target task into semantically meaningful subtasks \citep{zhang2026mgh}. While these intuitions are plausible, turning them into a predictive theory that can guide harness design requires precise hypotheses that can be tested quantitatively and, ideally, proved under suitable assumptions. The prevalence and success of harnesses make this an increasingly important task.

In this paper, we examine the success of inference-time harnesses through a computational lens: we show a model capable of executing each individual step of a harness may nevertheless be unable to represent the computation performed by the composition represented by the harness. We first investigate this hypothesis experimentally, then give a theoretical setting in which such internalization is \emph{provably} impossible under standard computational modeling assumptions of the base model.

We first perform controlled experiments on Dyck languages (a formal language of valid nested parentheses) and Sokoban (a maze-like puzzle game), where we can train a model from scratch and vary the difficulty of generalization. We train a base model on ground-truth demonstrations, then augment it with a harness: either a principled backtracking algorithm (VGB) introduced by \citet{rohatgi2025vgb}, or A* search with model-pruned expansions \citep{hart1968astar,lehnert2024searchformer}. Together, the harness and base model specify a teacher whose behavior we attempt to reproduce with a student of the same size and model class as the base model. The student is trained on the teacher's traces, either from scratch or by fine-tuning the base model. In both cases, a substantial gap remains, particularly out of distribution. Changing the amount and composition of training data, the trace format, or the number of training iterations does not close this gap in the settings we test.

Why does this training fail to reproduce the teacher's behavior? One possible explanation is that a harness fundamentally increases the representational capacity of the base model, so that
reproducing the harness requires a larger model than executing its individual calls.
We investigate this by increasing the student's size while keeping the teacher fixed: sufficiently large students match and eventually exceed their teacher's performance. We also test a complementary hypothesis that a simpler harness should be easier for the original student to internalize. Specifically, we replace verifier-controlled backtracking with a rule that deletes a fixed number of tokens and regenerates. The same-sized student can match this simpler teacher.

Model size is not the only reasonable consideration: a student may also need more ``reasoning steps'' to carry out the harness's computation.  Indeed, recent work has shown that increasing the length of a chain of thought can be understood to mimic the representational capacity of a much deeper model \citep{merrill2024expressive,li2024chain,merrill2025depth,amiri2025lower}.  In particular, on modular arithmetic, we find that training a model to use a longer chain of thought lets it match and eventually exceed the accuracy obtained by augmenting a shorter-chain model with verifier-guided search. These experiments suggest that successful internalization depends on the student's size, the number of reasoning steps it can take, and the complexity of the strategy it must reproduce.

These observations motivate a formal separation in \emph{representational capacity}. We consider directed $s$--$t$ reachability \citep{jones1975space}, the task of path-finding in a graph. To model a search harness, we think of vertices as reasoning states and edges as valid transitions. The base model only answers local edge queries, which are adjacency-matrix lookups computable by constant-depth Boolean circuits ($\AC^0$). The harness maintains a search frontier and visited set, using these queries to perform breadth-first search and decide reachability in polynomial time.

Internalizing the harness requires the student to decide reachability from the graph itself. Under standard precision assumptions, fixed-depth transformers of polynomial size can be simulated by constant-depth threshold circuits ($\TC^0$), even with logarithmic-length chain-of-thought \citep{li2024chain,merrill2025depth}. Since directed reachability is $\NL$-complete, the widely believed computational complexity separation $\NL\not\subseteq\TC^0$ implies that no such student can reproduce the harness on all inputs. The separation thus captures a concrete obstacle to internalization: composing local decisions through an external search procedure can exceed the students' representational capacity.

Finally, we examine whether the same pattern persists with an agentic harness around a pretrained language model. We use Branching \citep{bateni2025branching}, which organizes model calls into a tree: leaves propose solutions independently, and internal nodes synthesize new solutions from their children. Using Qwen3-1.7B on competition mathematics (DAPO-Math-17k), we flatten the teacher's runs into transcripts and LoRA fine-tune the same backbone to reproduce them in a single autoregressive rollout. As the branching factor $d$ and depth $L$ increase, the teacher's accuracy improves while the student's eventually plateaus, widening the gap. Increasing the LoRA rank yields only modest gains, whereas enlarging the backbone closes the gap. These results support the same interpretation as our controlled experiments and theory: internalizing a harness may require a larger model than the one used in the harness.

\begin{figure}[t]
\centering
\vspace{-25pt}
\includegraphics[width=\linewidth]{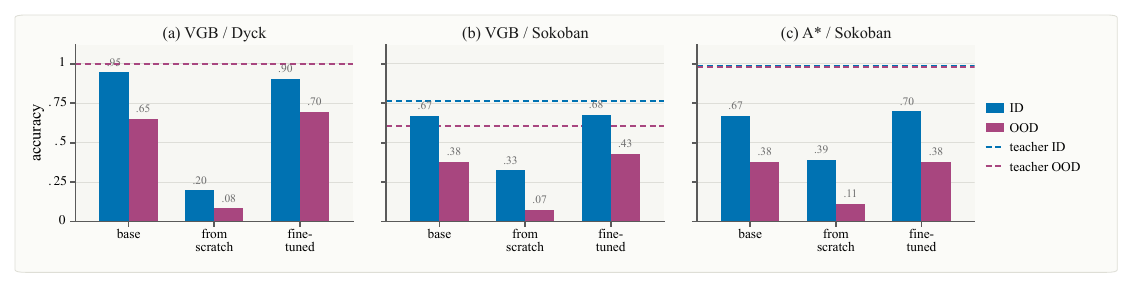}
\vspace{-20pt}
\caption{\textbf{Same-size students cannot recover the teacher.} \emph{ID (blue) / OOD (magenta) accuracy of the base model and the two students; dashed lines denote the teacher.}}
\vspace{-15pt}
\label{fig:main-results}
\end{figure}

\subsection{Our contributions}

\smallskip
\noindent\textbf{Students cannot internalize natural inference-time harnesses if they are the same size as the teacher.} We generate
training data from traces of a model $M$ used inside a harness, and perform supervised finetuning (SFT) of $M$ on them. Across two natural harnesses: a principled backtracking algorithm (VGB)
\citep{rohatgi2025vgb} and A* search \citep{hart1968astar}; a set of natural, extensive ablations varying the amount of traces, training iterations, trace format, amount of out-of-distribution data---the resulting student never closes the gap to its teacher.

\smallskip
\noindent\textbf{The performance gap closes when increasing relative expressive power of the student.} We show empirically that the
gap in performance between a model with a harness and a (finetuned) model without the harness closes as the student's expressive power grows relative to the task---for example, using a larger student, an
easier scaffold, or a longer chain-of-thought.

\smallskip
\noindent\textbf{There is a formal obstruction  to internalizing a harness provided by representational capacity.} We consider directed $s$--$t$ reachability on a graph as a paradigmatic problem modeling path-finding. A base model that answers local edge queries---an $\AC^0$ computation---can be used by a polynomial-time harness to run breadth-first search and decide reachability. On the other hand,  a student with an $O(\log n)$-length chain-of-thought provably stays inside $\TC^0$. By contrast, under standard precision assumptions, a fixed-depth, polynomial-size student remains in $\TC^0$ even with an $O(\log n)$-length chain-of-thought. Thus, assuming $\NL\not\subseteq\TC^0$, no such student can internalize the harness.

\smallskip
\noindent\textbf{Our results extend to popular harnesses on mathematical reasoning tasks.}
We distill the Branching harness \citep{bateni2025branching} on competition math using Qwen3-1.7B. The harness reaches $43.6\%$ accuracy with $85$ model calls, while a single-pass student saturates near $36\%$. Increasing the LoRA rank improves the student by only $2.2$ percentage points; increasing its backbone from $1.7$B to $4$B closes the gap.

\subsection{Preliminaries and notation}

Throughout the paper we fix a base autoregressive model $M$ (the \emph{base model}), trained
on clean demonstrations of a task with standard next-token prediction. An
\emph{inference-time harness} (or \emph{scaffold}) $\mathcal{A}$ is any algorithm that can call $M$ as an oracle. The resulting teacher is $\mathcal A[M]$. A run of $\mathcal{A}$ produces a \emph{trace}---the
full sequence of calls and answers. The specific form of this trace will vary with the choice of scaffold. A \emph{student} $S$ is a model trained by next-token prediction on
these traces. We ask whether $S$ can \emph{internalize} $\mathcal{A}$---i.e.\ match the accuracy of
$\mathcal{A}[M]$ from one forward pass.

\noindent\textbf{Computational complexity classes.} We recall several standard definitions of uniform circuit classes; see
\citet{arora2009complexity} and \citet{vollmer1999circuit} for background. \(\AC^0\) denotes constant-depth, polynomial-size circuits
with unbounded fan-in AND/OR/NOT gates. \(\TC^0\) adds unbounded
fan-in threshold (majority) gates. Throughout the
formal analysis, these families are logspace-uniform: a deterministic procedure
generates the circuit from $1^n$ using $O(\log n)$ workspace.  The classes
$\mathsf L$ and $\NL$ consist of languages decided using deterministic and
nondeterministic logarithmic space, respectively. We recall the following standard inclusions:
\vspace{-1.5mm}
\[
    \textstyle
    \AC^0 \subseteq \TC^0 \subseteq \mathsf L \subseteq \NL \subseteq \mathsf P .
\]
\vspace{-5mm}

\section{Students with the Same Expressive Power as the Base Model Cannot Recover the Teacher's Performance}
\label{sec:cannot-recover}

We begin with empirical observations derived in simple, but clean settings in which we can train a student model from scratch, and perform thorough sweeps and ablations across various relevant aspects of the setup. Concretely, we consider harnesses inspired by search algorithms (principled backtracking based on \citep{rohatgi2025vgb} and A*), for the tasks of generating valid strings of nested parentheses (i.e. Dyck languages) and Sokoban (a maze-like puzzle game) respectively---and ask whether distillation can transfer the harness's gains to a student of the same size and architecture as the base model, and hence the same expressive power. We find that students trained from scratch perform far below even the base models; students fine-tuned from the base models also cannot reach the teachers' gains.

\begin{figure}[t]
\centering
\includegraphics[width=\linewidth]{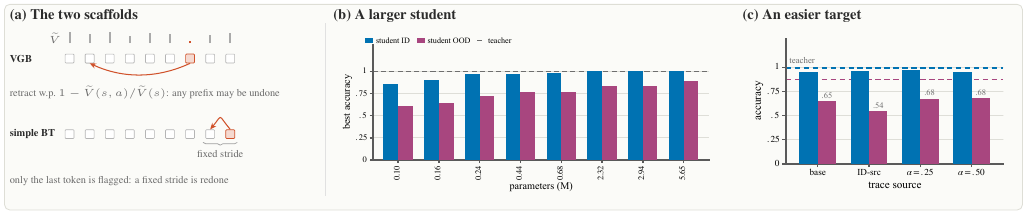}
\setlength{\abovecaptionskip}{4pt}
\small
\vspace{-10pt}
\caption{\textbf{Bigger students and a simpler scaffold narrow the gap.} \emph{(a) The two scaffolds. (b) Accuracy against the student's parameter count. (c) Accuracy against the distribution the traces were drawn from; dashed lines mark the teacher.}}
\vspace{-15pt}
\label{fig:scale-bt}
\end{figure}

\subsection{Setup}
We use two tasks with an easy-to-control in-distribution (ID) / out-of-distribution
(OOD) instances.
\begin{enumerate}[leftmargin=*]
\item \textbf{Dyck}: sequences of properly nested brackets, with every opening bracket matched by a closing bracket of the same type. We use $4$ bracket types and sequences of length $64$. The ID distribution draws bracket types with probabilities $[0.1,0.2,0.3,0.4]$, while the OOD distribution reverses these probabilities to $[0.4,0.3,0.2,0.1]$.
\item \textbf{Sokoban}: a puzzle in which a player moves through a grid and pushes a box onto a target, with walls blocking the movement. We use a $5\times5$ grid with one player, one box, one target, and up to three walls, with actions $\{U,D,L,R\}$. ID instances have short solutions, while OOD instances are biased toward longer ones (reversed solution-length buckets).

\end{enumerate}

We consider the following two scaffolds:

\begin{enumerate}[leftmargin=*]
    \item \textbf{VGB} \citep{rohatgi2025vgb}. The \modelc{} proposes an action
$a\sim M(\cdot\mid s)$, a verifier scores it, and the step is committed or retracted (\BT)
according to the Metropolis-style ratio
\vspace{-1.5mm}
\[
  \textstyle p_{\text{retract}}
  \;=\;\max\big(\varepsilon,\;1-\widetilde V(s,a)/\widetilde V(s)\big).
\]
\vspace{-5mm}

\noindent Here $\widetilde V$ is the \emph{policy-conditional} success probability---how likely
the \modelc{} is to produce a valid solution from the current prefix---so the ratio measures the
progress the proposed action makes:
\vspace{-1.5mm}
\[
  \textstyle \widetilde V(s)=\mathbb{E}_{\tau\sim M(\cdot\mid s)}
  \big[\mathbf 1[\text{valid}(s\circ\tau)]\big],
  \qquad \widetilde V(s,a):=\widetilde V(s\circ a).
\]
\vspace{-5mm}
\item \textbf{A* search} \citep{hart1968astar} maintains a frontier of candidate states and repeatedly expands the state with the smallest $f=g+h$. Here $g$ is the number of moves taken so far, and $h$ is the Manhattan distance from the box to the target. At each expansion, we retain only the top-$k{=}2$ successor actions according to \modelc's logits. The trace records when a node enters the frontier (\texttt{<CREATE>}) and when it is expanded (\texttt{<CLOSE>}), along with its $g,h,f$ values; it ends with the resulting \texttt{<PLAN>}. This search-log serialization follows \citet{lehnert2024searchformer}.

\noindent The trace prepends a quantized verifier bucket before each action or \BT{} event, so
the student sees the signal that drove every backtrack.

\end{enumerate}

\begin{figure}[t]
\centering
\vspace{-15pt}
\includegraphics[width=\linewidth]{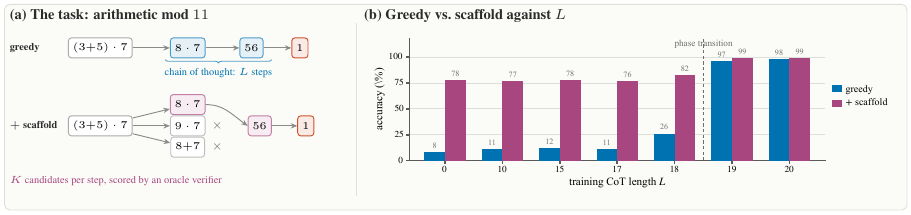}
\setlength{\abovecaptionskip}{4pt}
\vspace{-10pt}
\small
\caption{\textbf{A long enough chain-of-thought matches the scaffold.} \emph{(a) The task. (b) Accuracy against the training chain-of-thought length $L$; the dashed line marks the phase transition.}}
\label{fig:cotlen}
\vspace{-15pt}
\end{figure}

We defer the exact Metropolis derivation, verifier instantiations, and trace formatting
to Appendix~\ref{app:scaffold-details}.

Both student and teacher models are $2$-layer Transformers with
$d_{\text{model}}{=}128$ and $4$ heads ($\approx0.68$M parameters), so the student is the
\emph{same size} as the model the teacher. The \modelc{} is trained on clean ID
demonstrations. Students are trained in two ways: (i) \emph{from scratch} on a pool of traces and (ii) \emph{fine-tuned} from the \modelc{} on a mixture of traces and clean data. We
report accuracy on held-out ID and OOD splits under greedy/top-$p$ decoding (\texttt{gen});
unless otherwise noted, the teacher figure is the full scaffold $\mathcal{A}[M]$.

\subsection{Experimental Results}
\label{sec:cannot-recover-results}

We first show that in both tasks, the student cannot recover the teacher's accuracy. We then perform  ablations on the training data
distribution, the number of training iterations, and the amount of training data that gives strong evidence that
the gap between the student and the teacher is not caused by training dynamics.

\noindent\textbf{Student Cannot Recover Teacher Model.}
Figure~\ref{fig:main-results} compares the \modelc, the teacher $\mathcal{A}[M]$, and the two
students in the three (algorithm, domain) settings. A student trained from scratch on the
teacher's traces falls below the \modelc{} both ID and OOD, likely because the traces are a
much harder learning target than clean task data. A student fine-tuned from the \modelc{} stays
close to the \modelc{} on both splits, but well below the teacher.

\noindent\textbf{Ablations.}\label{sec:cannot-recover-ablation}
We give strong evidence the gap is not due to a failure of training: we sweep various relevant aspects in the training setup, and the same-size student always stays
below the teacher. Full grids, including four trace formats, are in
Appendix~\ref{app:cannot-recover-ablation}.

\noindent$\bullet$~\emph{Fine-tuning sweeps.}
Starting from the \modelc{} we vary, one axis at a time: the \emph{clean-data ratio} mixed with
VGB traces ($0$--$100\%$), the number of \emph{fine-tuning iterations} ($3$K--$8$K), the
\emph{number of traces} ($1$K--$20$K traces), and the \emph{training-source mixture}
$P_\alpha=(1-\alpha)P_{\text{ID}}+\alpha P_{\text{OOD}}$ of the puzzles the traces are drawn
from ($\alpha\in\{0,0.25,0.5,0.75\}$). Across all four axes the student tracks the base
model and stays below the teacher dashed line (Figure~\ref{fig:vgb-ft-ablation},
Appendix~\ref{app:ft-grids}). Adding more data, more iterations, and richer source mixtures change
the student only marginally.

\noindent$\bullet$~\emph{From-scratch sweeps.}
Holding the student at the fixed teacher scale ($0.68$M), we sweep the remaining data and
optimization knobs of the from-scratch training: random \emph{seeds}, the training-source
$\alpha$-mix, and an \emph{OOD-source} variant where the training traces themselves come
from OOD instances. Across all of these sweeps, the gap between the student and the teacher remains
(Figure~\ref{fig:app-fs}, Appendix~\ref{app:fs-grids}).

\section{The Student--Teacher Gap Closes as the Student's Expressive Power Increases}
\label{sec:closing-gap}

In Section~\ref{sec:cannot-recover} we systematically performed various changes to the training setup, but the student was always taken to be in the
\emph{same} model class as the model its teacher wraps.
In this section, we remove that constraint and investigate the effect of allowing the student's expressive
power to grow---relative to the hardness of the task. We find that the gap
is not an intrinsic property of trace distillation: it shrinks, and eventually vanishes, along
many axes that increase the student's representational capacity.
We consider three ways of increasing the student's expressive power relative to the scaffolded
teacher: enlarging the student's size (Section~\ref{sec:gap-size}), replacing the scaffold with a simpler
one (Section~\ref{sec:gap-easy}), and lengthening the student's chain-of-thought
(Section~\ref{sec:gap-cot}).

\subsection{Bigger Students Learn Harnesses Better}
\label{sec:gap-size}

We first consider increasing the student's expressive power by directly increasing its number of
parameters.

\noindent\textbf{Setup.}
Section~\ref{sec:cannot-recover} fixed the student at the teacher scale ($\approx0.68$M).
Here we sweep $12$ scales over two orders of magnitude in parameter count
($0.10$M--$11.5$M) and ask whether a \emph{bigger} student internalizes the scaffold better.
Each (scale, clean-ratio, iteration) cell is a fresh student trained on the same pool of traces
as Section~\ref{sec:cannot-recover}, evaluated on the held-out ID/OOD split. We report the
per-scale best accuracy.

\noindent\textbf{Results.}
As the student grows, its gap to the teacher shrinks on both ID and OOD: on VGB-Dyck
(Figure~\ref{fig:scale-bt}b) and on A*- and VGB-Sokoban (Figure~\ref{fig:app-modelsize-best},
Appendix~\ref{app:validation-size}).

\subsection{Simpler Harnesses Can be Learned without Increasing Student Size}
\label{sec:gap-easy}

Next, we consider making the target task easier: the teacher's scaffold is
replaced by a simpler one, which implicitly increases the student's expressive power relative to the
scaffolded teacher. We again find that the gap between the student and the teacher shrinks.

\noindent\textbf{Setup.}
VGB and A*-search
both maintain unbounded search state---the number of steps the teacher takes and the amount of
history grow with the problem. If we instead consider a harness which is \emph{bounded and local}, the student becomes larger in a relative sense. We test the simplest such scaffold on Dyck: single-step backtracking
(\textbf{BT}), where a verifier
flags the last token as erroneous and the model retracts a fixed stride and regenerates. The
distilled student learns to emit its \emph{own} \BT{} tokens autoregressively---internalizing
the policy with no external verifier---and we vary the source distribution of the distilled
traces, an $\alpha$-mixture between the ID and OOD bracket distributions.

\noindent\textbf{Results.}
On ID, the student recovers the teacher's accuracy (Figure~\ref{fig:scale-bt}c), whereas a
same-size student distilled from VGB on Dyck stays below its teacher
(Section~\ref{sec:cannot-recover}, Figure~\ref{fig:main-results}). On OOD, the student does not
fully recover the teacher's accuracy, and how close it gets depends on the distribution the
traces are drawn from: traces from the ID distribution alone recover little, while mixing in a
small fraction of OOD traces ($\alpha{=}0.25$) brings the student much closer to the teacher
(Figure~\ref{fig:scale-bt}c). With this choice of training distribution, the gap between student
and teacher is smaller than when the student learns VGB. The full grid, including single-pass
accuracy and synthetic-BT fine-tunes, is in Table~\ref{tab:app-bt} (Appendix~\ref{app:validation-easy}).

\subsection{Longer Student Chains-of-Thought Allow for Better Performance}
\label{sec:gap-cot}

Prior theoretical work has theoretically shown that increasing the length of the chain-of-thought also
increases a student's expressive power \citep{merrill2024expressive,li2024chain}. In this section, we consider a modular arithmetic task and show that the longer the student's
chain-of-thought, the smaller the gap to the scaffolded teacher.

\noindent\textbf{Setup.}
We consider modular arithmetic over $\mathbb Z_{11}$ \citep{feng2023cot}, built up from
numbers, arithmetic operators, and parentheses. A reduction step evaluates the
leftmost subexpression that is ready to be computed and replaces it with its
value modulo $11$, until we are left with a single number. This rule
gives each expression a prescribed chain of $n_{\text{ops}}$ steps. We train a
depth-$3$ transformer to produce only the first $L$ steps of that chain and
vary $L$ to control how much of the computation it learns to perform. At
inference time, the scaffold samples $K{=}16$ candidate next steps from the
model. An oracle checks which candidates match the next valid reduction; the
scaffold commits a valid candidate, or backtracks if all $16$ fail.

\noindent\textbf{Results.}
Figure~\ref{fig:cotlen}b ($n_{\text{ops}}{=}20$) compares, at each training chain-of-thought
length $L$, the model decoded greedily with the same model used in the scaffold. With the
scaffold (``+ scaffold'' bars), accuracy stays high at every $L$. Without it (``greedy'' bars), a
model trained on a short chain-of-thought falls far below the scaffold; its accuracy rises with
$L$, sharply across the phase transition marked by the dashed line between $L{=}18$ and
$L{=}19$, and at $L{=}19$ and $L{=}20$ it matches the scaffold.
The experiment
therefore confirms that increasing the student's expressive power through a longer chain-of-thought
closes its gap to the scaffolded teacher. The task definition, scaffold, and the
$n_{\text{ops}}{=}10$ curve are in Appendix~\ref{app:validation-cot}.

\section{An Expressivity Separation Between Scaffold and Student}
\label{sec:theory-gap}

The experiments in Sections~\ref{sec:cannot-recover} and~\ref{sec:closing-gap} empirically show that a
student trained on a scaffold's traces cannot match its teacher's accuracy when the student
is only as expressive as the teacher model. Once the student's expressive power grows
relative to the teacher's, however---through a larger model, a longer chain-of-thought, or an
easier task---the gap between the two narrows. The natural hypothesis is therefore that the
student cannot even \emph{represent} what the computation of the scaffold. This section shows this hypothesis is formally true under natural mathematical modeling abstractions from complexity theory used to understand the representational power of Transformers \citep{merrill2022saturated,li2024chain}.

\begin{figure}[t]
\centering
\vspace{-15pt}
\includegraphics[width=\linewidth]{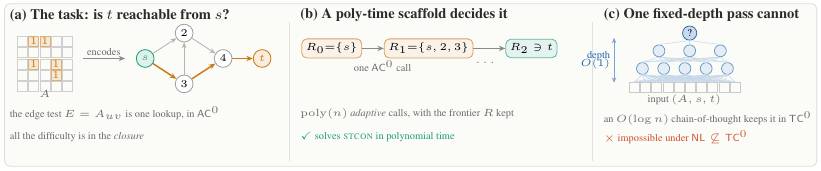}
\setlength{\abovecaptionskip}{4pt}
\vspace{-10pt}
\small
\caption{\textbf{Illustration of the directed \(s\)--\(t\) reachability task.} \emph{(a) The path-finding task. (b) With an $\AC^0$ edge oracle, a harness decides it in polynomial time. (c) A model with a bounded chain-of-thought cannot.}}
\label{fig:theory}
\end{figure}

\subsection{Setup: path-finding in a graph}

We use directed \(s\)--\(t\) reachability (\STCON, \citep{jones1975space}) as a paradigm mathematical abstraction for the task of path-finding in a graph. An instance consists of a directed graph with adjacency matrix \(A\in\{0,1\}^{N\times N}\), a source \(s\), and a target \(t\). The task is to decide whether there is a path \(s=v_0\to v_1\to\cdots\to v_T=t\). Checking whether one proposed move is valid in the graph is easy: the local edge predicate \(E_A(u,v)=A_{uv}\) is computable in \(\AC^0\). Indeed, \(E_A(u,v)=\bigvee_{i,j\in[N]}(A_{ij}\land[u=i]\land[v=j])\), which is a constant-depth circuit of polynomial size because each equality test is an AND of \(O(\log N)\) literals and the outer OR has \(N^2\) terms. Reachability asks whether some sequence of such moves leads to \(t\), so it requires combining many local answers.

Under standard complexity separation assumptions, there is a gap between the computational difficulties of checking for the validity of a single move and connectivity of $s$ and $t$: while checking an edge is in \(\AC^0\), deciding whether a path exists is \(\NL\)-complete even under \(\AC^0\) reductions \citep{immerman1987languages}. Hence, assuming \(\NL\not\subseteq\TC^0\), no \(\TC^0\) circuit can solve this task.
This is helpful because our lower bound uses an expressivity upper bound for the student: under standard precision and uniformity assumptions, a fixed-depth transformer can be simulated by a \(\TC^0\) circuit \citep{merrill2022saturated,merrill2023parallelism}. The scaffold uses the model only for the simpler query \(E(A,u,v)=A_{uv}\), while the student must decide reachability from the full input \((A,s,t)\). Thus, even if we grant the student every computation in \(\TC^0\), the assumed separation \(\NL\not\subseteq\TC^0\) rules out solving all reachability instances.

In fact, the same machinery will be able to handle even chain-of-thought of logarithmic length. To that end, we'll define mathematically a model allowing chain-of-thought. Fix a finite
alphabet $\Sigma$ containing a padding symbol $\bot$, and an integer $c\ge1$, and set
\(
  T_c(n)=\bigl\lceil c\log_2(n+1)\bigr\rceil.
\)
Shorter runs are padded to this length, and the final predicate ignores the padding.

\begin{definition}[CoT with $\TC^0$ verification]\label{def:cot}
A language $L$ lies in $\CoT_{T_c(n)}(\Sigma,\TC^0)$ if there are step
predicates $V_{n,i}(x,z_{<i},z_i)$, for $1\le i\le T_c(n)$, and a final predicate
$B_n(x,z)$ such that
\[
  \textstyle
  x\in L\quad\Longleftrightarrow\quad
  \text{there exists } z\in\Sigma^{T_c(n)} \text{ such that}\quad
  B_n(x,z)\land\bigwedge_{i=1}^{T_c(n)}V_{n,i}(x,z_{<i},z_i).
\]
The predicates have a common constant depth bound and a common polynomial size bound
in $n$. They are jointly logspace-uniform: one logarithmic-space procedure, given
$(1^n,i)$, generates the circuit for $V_{n,i}$, and another, given $1^n$, generates
the circuit for $B_n$.
\end{definition}

This abstraction includes deterministic autoregressive generation: $V_{n,i}$ checks
that $z_i$ is the token prescribed by the model on the preceding prefix, and $B_n$
checks whether the final answer is yes. Allowing any transcript satisfying these
checks can only enlarge the class of student computations.

\subsection{Harnesses Allow for Increased Expressivity}

First, we show that adding logaritmically bounded CoT results in a model still in $\TC^0$, using the same transcript-enumeration argument as
\citet[Theorem~4 and Appendix~E]{scaffold_merrill2025depth}:

\begin{lemma}[Short CoT remain in $\TC^0$]\label{lem:logcot}
For every fixed finite alphabet $\Sigma$ and integer $c\ge1$,
\[
  \textstyle
  \CoT_{T_c(n)}(\Sigma,\TC^0)\subseteq\TC^0.
\]
\end{lemma}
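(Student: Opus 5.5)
The plan is to enumerate all candidate transcripts in parallel, using the fact that the transcript is short. Since $|\Sigma|$ is a fixed constant and $T_c(n)=\lceil c\log_2(n+1)\rceil$, the number of candidate transcripts is
\[
  |\Sigma|^{T_c(n)} \le |\Sigma|^{c\log_2(n+1)+1} = |\Sigma|\,(n+1)^{c\log_2|\Sigma|},
\]
which is polynomial in $n$. The circuit for input length $n$ will contain, for each fixed $z\in\Sigma^{T_c(n)}$, a block that evaluates
\[
  C_z(x)=B_n(x,z)\land\bigwedge_{i=1}^{T_c(n)}V_{n,i}(x,z_{<i},z_i).
\]
Because $z$ is hard-wired, each predicate becomes a circuit in $x$ alone: we substitute constants for the transcript input wires and leave the circuits otherwise unchanged.

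Each block $C_z$ is one unbounded fan-in AND over $T_c(n)+1$ circuits. Each of these has the common constant depth bound $d$ and polynomial size, so $C_z$ has depth $d+1$ and polynomial size. The output gate is an unbounded fan-in OR over all $C_z$. The resulting circuit has depth $d+2$, which is constant, and size $\poly(n)\cdot\poly(n)\cdot O(\log n)=\poly(n)$. It accepts exactly when some $z$ satisfies all the checks, so by Definition~\ref{def:cot} it decides $L$. Here the common depth and size bounds in the definition are essential. Without them, the $\log n$ many step predicates could have growing depth.

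The main obstacle is logspace uniformity. We need a single $O(\log n)$-space machine that, given $1^n$, outputs the whole circuit. It proceeds as follows.
\begin{enumerate}
\item Compute $T_c(n)$, which takes $O(\log n)$ bits.
\item Iterate over $z$ as a counter in base $|\Sigma|$ of length $T_c(n)$. This counter occupies $O(\log n)$ bits.
\item For each $z$ and each $i$, run the given logspace generator for $V_{n,i}$ on $(1^n,i)$, and also the generator for $B_n$.
\item Relabel gate names by prefixing them with the encodings of $(z,i)$. Rewrite every reference to a transcript input wire $z_j$ as the constant $0$ or $1$ by reading the $j$-th symbol from the counter.
\end{enumerate}

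One subtlety is that the output of a logspace transducer cannot be stored. We handle this by composing the transducers in the standard way: we recompute sub-generator output bits on demand, which keeps the workspace at $O(\log n)$. Finally we emit the AND gates for the blocks and the top OR gate. Gate names are tuples of $O(\log n)$-bit indices, so the total workspace stays logarithmic. This places $L$ in logspace-uniform $\TC^0$, which proves Lemma~\ref{lem:logcot}.
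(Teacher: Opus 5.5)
Your proposal is correct and follows the paper's proof: you enumerate the $|\Sigma|^{T_c(n)}\le|\Sigma|(n+1)^{c\log_2|\Sigma|}=\poly(n)$ transcripts, hardwire each one into $B_n$ and the $V_{n,i}$, and take an AND per transcript under a single OR, giving depth $d+2$ and polynomial size, with logspace uniformity from $O(\log n)$-bit counters and the jointly uniform generators. Your handling of uniformity, where you recompute or stream the sub-generators' output rather than store it, is more explicit than the paper's.
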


There are only $|\Sigma|^{T_c(n)}=\poly(n)$ possible transcripts. A circuit checks
each transcript in parallel and accepts if one passes every check, increasing the depth only by two. Appendix~\ref{app:logcot} contains the complete proof.

\begin{theorem}[Scaffold--student separation]\label{thm:main}
Assume $\NL\not\subseteq\TC^0$. For $\STCON$ on an $N$-vertex graph
given by its adjacency matrix $A$, the following hold:
\begin{enumerate}[topsep=0.2em,itemsep=0.1em,leftmargin=*]
  \item[(i)] The local predicate $E(A,u,v)=A_{uv}$ lies in $\AC^0$. Given $N,s,t$
  and oracle access to $E_A$, a deterministic polynomial-time breadth-first-search
  scaffold decides reachability using at most $N^2$ oracle calls and $O(N)$ bits
  of external state.
  \item[(ii)] For every fixed finite alphabet $\Sigma$ and integer $c\ge1$,
  \[
    \textstyle
    \STCON\notin\CoT_{T_c(n)}(\Sigma,\TC^0),
    \qquad T_c(n)=\lceil c\log_2(n+1)\rceil,
  \]
  where $n$ is the encoded input length and the predicates satisfy
  Definition~\ref{def:cot}.
\end{enumerate}
\end{theorem}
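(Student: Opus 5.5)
The two parts are independent, so I will prove them separately.

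For part (i), membership of $E$ in $\AC^0$ is the explicit formula already given in the setup. I only need to confirm that the circuit family is logspace-uniform. A generator on input $1^n$ can enumerate the pairs $(i,j)$ and write, for each, the AND of the $O(\log N)$ bit-equality literals for $[u=i]$ and $[v=j]$, together with $A_{ij}$. This uses only counters of $O(\log n)$ bits.

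For the scaffold I will run standard BFS using two $N$-bit arrays, \emph{visited} and \emph{frontier}, plus $O(\log N)$ bits of counters. Initialize visited $=$ frontier $=\{s\}$. Then repeatedly pop some $u$ from the frontier; for every $v$ that is not yet visited, query $E_A(u,v)$, and if the answer is $1$, mark $v$ visited and push it. Accept iff $t$ is ever marked.

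Each vertex is popped at most once, and each pop issues at most $N$ queries, so there are at most $N^2$ oracle calls. The state is $2N+O(\log N)=O(N)$ bits, and the running time is polynomial. Correctness is the usual BFS invariant: a vertex is marked iff it is reachable from $s$. Soundness holds because every mark is witnessed by an edge from an already-marked vertex. Completeness follows by induction on path length: when the predecessor $u$ on a path is popped, its successor $v$ is either already marked or gets marked.

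For part (ii) I argue by contradiction. Suppose $\STCON\in\CoT_{T_c(n)}(\Sigma,\TC^0)$ for some $\Sigma$ and $c$. Lemma~\ref{lem:logcot} then gives $\STCON\in\TC^0$, with logspace-uniform circuits. Directed reachability is $\NL$-complete under logspace-uniform $\AC^0$ many-one reductions \citep{immerman1987languages}. So for any $L\in\NL$ there is a uniform $\AC^0$ family $f$ with $x\in L\iff f(x)\in\STCON$.

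Composing the $\AC^0$ circuit for $f$ with the $\TC^0$ circuit for $\STCON$ gives constant depth and polynomial size. To do this I must fix the output length $m=|f(x)|$, which is a polynomial in $|x|$, and use the $\STCON$ circuit for inputs of length $m$. Hence $L\in\TC^0$, so $\NL\subseteq\TC^0$, contradicting the assumption.

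The main obstacle is making the composition step rigorous, in two respects. First, the encoding must match: the reduction has to output an adjacency-matrix encoding of length exactly $m(|x|)$, which I would arrange by padding with isolated vertices. Second, uniformity must be preserved under composition: a logspace generator for the composed circuit can run both generators, recomputing gate indices on demand rather than storing them. This is standard, but it is where I would be careful.
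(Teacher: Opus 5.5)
Your proposal is correct and follows essentially the same route as the paper: the explicit logspace-uniform $\AC^0$ formula plus a graph search over $N$-bit characteristic vectors for part (i), and Lemma~\ref{lem:logcot} composed with Immerman's uniform $\AC^0$ (projection) reduction to $\STCON$ for part (ii). The only difference is minor. The paper searches layer by layer with a separate next-frontier vector $F'$, so its search is literally breadth-first and uses $3N+O(\log N)$ bits. Your set-based frontier, popped in arbitrary order, is a generic search that gives the same $N^2$ query bound and $O(N)$ state bound.
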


Part~(i) is the standard breadth-first-search algorithm. For part~(ii),
Lemma~\ref{lem:logcot} would put a successful student in $\TC^0$. Composing it
with the uniform $\AC^0$ reductions to $\STCON$ would then put every language
in $\NL$ in $\TC^0$, contradicting the assumption. The full proof is in
Appendix~\ref{app:separation}.

The theorem can be seen as giving an example of a natural task on which a scaffold simulating a search procedure combines elementary
model calls into a computation that no student in the stated class can reproduce on
all inputs, even with $O(\log n)$ chain-of-thought tokens. This is a representational obstruction to internalization. Commensurate with the experiments, longer chains of thought or increase in depth can change asymptotic expressive power
\citep{merrill2024expressive,scaffold_merrill2025depth}.

\section{Agentic Scaffolding}
\label{sec:agentic}

This section moves from simple, controlled tasks  (Dyck, Sokoban) to more realistic ones: a
\textbf{Branching} scaffold \citep{bateni2025branching} applied to competition mathematics problems. We show
that the gap between student and teacher widens as the number of calls grows, thereby demonstrating the broader applicability of our earlier theory and experiments.

\begin{figure}[t]
\centering
\vspace{-10pt}
\includegraphics[width=\linewidth]{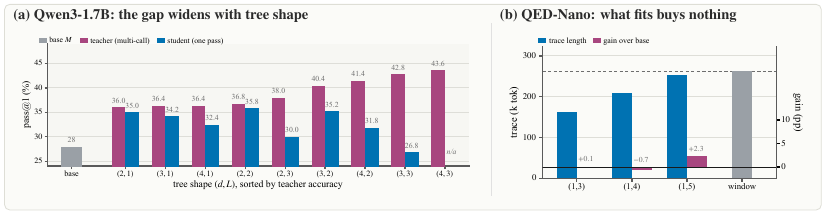}
\vspace{-15pt}
\caption{\textbf{Larger trees widen the gap.} \emph{(a) Teacher and student accuracy on Qwen3-1.7B per tree shape $(d,L)$. (b) Trace length and gain over the base model on QED-Nano.}}
\vspace{-15pt}
\label{fig:agentic-scaling}
\end{figure}

\noindent\textbf{Setup.}
The base model $M$ is Qwen3-1.7B (\texttt{enable\_thinking=False}); the task is DAPO-Math-17k,
with correctness judged by \texttt{math-verify} on the last \verb|\boxed{...}|. We distill
from a $1500$-problem pool of traces (cached pass@1 $\approx0.30$) and evaluate on a disjoint
$500$-problem set (pass@1 $\approx0.31$).

\noindent\textbf{Branching scaffold.}
The Branching scaffold calls $M$ in a depth-$L$ tree: the leaves call $M$ on the question at temperature
$1.0$, and each internal node calls $M$ once to synthesize a \emph{fresh} solution from its
children. The canonical $\mathbf{ks}{=}[3,3]$ tree uses $9$ leaves $\to3$ aggregators $\to1$
root ($13$ calls, no verifier). The distillation-knob ablations use the smaller
$\mathbf{ks}{=}[3]$ variant ($3$ leaves $+1$ root, $4$ calls); the shape sweep varies the topology of the full tree. We flatten the transcript
(\texttt{\#\# Attempt}\,/\,\texttt{\#\# Final Solution}) and LoRA fine-tune $M$ to emit the whole
trace, and hence the answer, in one autoregressive rollout, keeping only root-correct runs.

\noindent\textbf{Experimental Result.}
We distill Branching trees of increasing size into a same-size student, varying the number of
children $d\in\{2,3,4\}$ and the depth $L\in\{1,2,3\}$ (Figure~\ref{fig:agentic-scaling}a; full
grid in Figure~\ref{fig:app-branching-scaling}). When $d$ and $L$ are small (left of Figure~\ref{fig:agentic-scaling}a), the student recovers
most of the teacher's gain over the base model in one autoregressive rollout. This agrees with
Section~\ref{sec:gap-easy} (Figure~\ref{fig:scale-bt}c), where a simpler
backtracking rule narrows the teacher--student gap.
As $d$ and $L$ grow (right of Figure~\ref{fig:agentic-scaling}a), the teacher keeps improving as calls are added, while the unscaffolded student does not, and it
collapses once the tree reaches depth three. Thus, larger teacher trees are harder to reproduce with the tested same-size students.

\noindent\textbf{Ablation.} We compare changes to the student backbone, the number of
trainable parameters, and the data mixture (full results in Appendix~\ref{app:agentic-branching}).

\noindent$\bullet$~\emph{Backbone size has the larger effect.} We compare enlarging the backbone
with increasing LoRA rank and with full fine-tuning at fixed backbone size. Moving from the $1.7$B to the $4$B backbone closes the gap
and surpasses the teacher (Figure~\ref{fig:app-branching-capacity}a), whereas raising the LoRA
rank, or fine-tuning every weight, makes little difference
(Figure~\ref{fig:app-branching-capacity}b).

\noindent$\bullet$~\emph{The tested clean-data mixtures do not close the gap.} The student tolerates
about half clean data, then starts to fall back toward the clean-data baseline, and no mixture can reache the
teacher's performance (Figure~\ref{fig:app-branching-ablation}a).

\noindent\textbf{A harness on a strong prover relative to model size.}
Unlike the off-the-shelf Qwen3-1.7B used above, QED-Nano \citep{qednano2026} is a $4$B prover
built specifically for Olympiad proofs, trained on solutions from a $685$B model and then refined
by reinforcement learning, which makes it unusually strong relative to its size. After RL, the released checkpoint is used in a multi-call scaffold at evaluation
time, and its strongest reported results use that scaffold. We put Branching on that same
RL checkpoint and sweep the tree shape, scoring both arms of the same trace on $300$ problems
from the model's own RL corpus. Across the shapes that stay within seven calls in total, none achieves a gain
whose confidence interval is above zero (Figure~\ref{fig:agentic-scaling}b,
Appendix~\ref{app:qed}). A clear gain does appear once we go to $d{=}3$, $L{=}3$, but by then the
estimated total trace length is more than five times the context window considered
for distillation. Within this sweep, the shape with a clear gain also has the longest estimated trace.
This creates an obstacle to distilling the complete trace within that context budget. We do not train a QED-Nano student in this experiment.

\section{Conclusion}

We studied when a model can internalize the benefits of an inference-time harness. Across controlled tasks and agentic scaffolds, we find that same-size students remain below their teachers in our experiments, while larger students, longer chains of thought, and simpler scaffolds can narrow or close the gap. We formalize a representational obstruction that explains these observations: under $\NL\not\subseteq\TC^0$, a scaffold simulating a search procedure exceeds the expressive power of a short-CoT student class. We thus clarify computational benefits of scaffolds.

\subsection*{AI use statement}
We did not use AI to plan or design the experiments in this work. However, we used AI tools for
coding and for writing assistance. We have reviewed all AI-assisted work and take
responsibility for the final content of this paper.

\subsection*{Acknowledgements}
A.R. and J.G. are supported in part by NSF awards IIS-2211907, CCF-2238523, IIS-2403275, an Amazon
Research Award, ONR award N000142512124, a Google Research Scholar Award, and an OpenAI
Superalignment Fast Grant.  A.B. was supported in part by a research grant from Schmidt Sciences.

\bibliographystyle{iclr2027_conference}
\bibliography{references}

\appendix
\setcounter{topnumber}{1}\setcounter{totalnumber}{1}
\renewcommand{\topfraction}{0.9}\renewcommand{\textfraction}{0.1}\renewcommand{\floatpagefraction}{0.9}
\makeatletter\setlength{\@fptop}{0pt}\makeatother
\etocdepthtag.toc{appendixmatter}
\etocsettagdepth{mainmatter}{none}
\etocsettagdepth{appendixmatter}{subsection}
\etocsettocstyle{\section*{Appendix: contents}}{}
{\linespread{1.5}\selectfont\tableofcontents}
\newpage

\section{Related Work}
\label{app:related}

\begin{figure}[t]
\centering
\includegraphics[width=\linewidth]{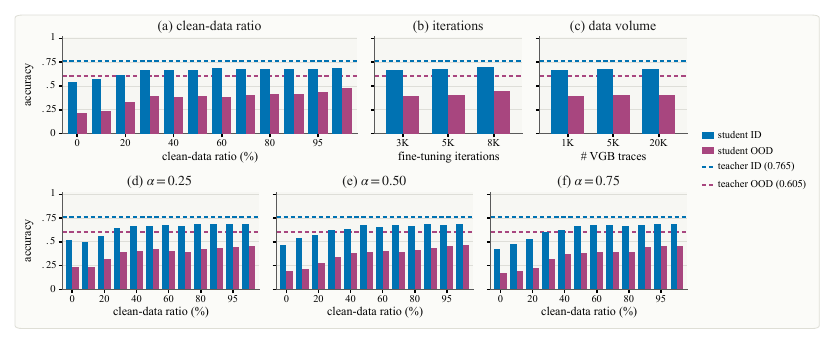}
\caption{\textbf{No fine-tuning setting reaches the teacher.} \emph{A same-size student fine-tuned on VGB traces (Sokoban, MC-oracle score, score tokens kept). Top row: clean-data ratio (a), number of iterations (b) and number of traces (c). Bottom row: the clean-data sweep repeated with training puzzles drawn from $P_\alpha$, for $\alpha=0.25$ (d), $0.50$ (e) and $0.75$ (f). Dashed lines mark the teacher.}}
\label{fig:vgb-ft-ablation}
\end{figure}

\begin{figure}[t]
\centering
\includegraphics[width=\linewidth]{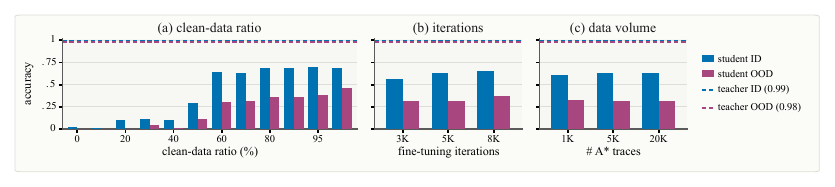}
\setlength{\abovecaptionskip}{4pt}
\small
\caption{\textbf{Fine-tuned A* students stay at the base model.} \emph{Fine-tuning the \modelc{} on A* traces (Sokoban). For every clean-data ratio, number of iterations and amount of data, the student stays close to the base model and well below the teacher (dashed).}}
\label{fig:app-astar-ft}
\end{figure}

\begin{figure}[t]
\centering
\vspace{-15pt}
\includegraphics[width=\linewidth]{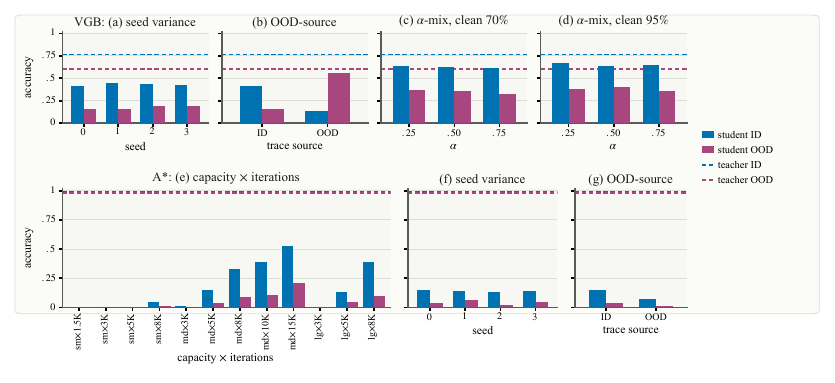}
\setlength{\abovecaptionskip}{4pt}
\small
\caption{\textbf{From-scratch students stay far below the teacher.} \emph{Students trained from scratch, with VGB in the top row and A* search in the bottom row. No seed, trace source or rough change in size brings either student close to its teacher (dashed; VGB $0.765/0.605$, A* $0.99/0.98$).}}
\label{fig:app-fs}
\end{figure}

\begin{table}[t]
\centering
\caption{\emph{The $12$ student scales of Section~\ref{sec:gap-size}, plus \texttt{large}. VGB is run on the $8$ scales from \texttt{tiny} to \texttt{m\_plus}; A* search and Dyck use all of them.}}
\label{tab:scales}
\setlength{\tabcolsep}{4.5pt}
\renewcommand{\arraystretch}{1.1}
{\footnotesize
\begin{tabular}{@{}lcccccc@{}}
\toprule
scale & \texttt{tiny} & \texttt{xsmall} & \texttt{small} & \texttt{s\_plus} & \texttt{teacher}
      & \texttt{t\_plus} \\
\midrule
$(d,L,H)$ & 32/2/4 & 48/2/4 & 64/2/4 & 96/2/4 & 128/2/4 & 160/2/8 \\
params    & $0.10$M & $0.16$M & $0.24$M & $0.44$M & $0.68$M & $0.98$M \\[3pt]
\midrule
scale & \texttt{mini} & \texttt{m\_m1} & \texttt{m\_m2} & \texttt{medium} & \texttt{m\_plus}
      & \texttt{large} \\
\midrule
$(d,L,H)$ & 192/3/8 & 224/3/8 & 256/3/8 & 256/4/8 & 320/4/8 & 384/6/8 \\
params    & $1.77$M & $2.32$M & $2.94$M & $3.73$M & $5.65$M & $11.5$M \\
\bottomrule
\end{tabular}%
}
\end{table}

\begin{figure}[t]
\centering
\vspace{-10pt}
\includegraphics[width=\linewidth]{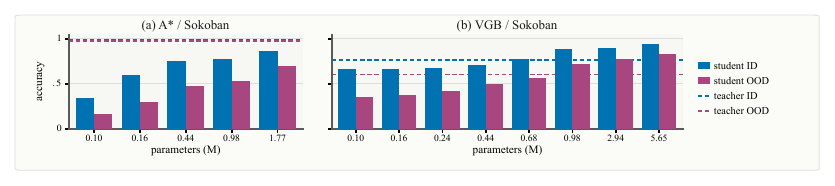}
\setlength{\abovecaptionskip}{4pt}
\small
\vspace{-15pt}
\caption{\textbf{Larger students close the gap on Sokoban.} \emph{Best accuracy at each scale in the two Sokoban settings.}}
\label{fig:app-modelsize-best}
\vspace{-10pt}
\end{figure}

\begin{figure}[t]
\centering
\includegraphics[width=\linewidth]{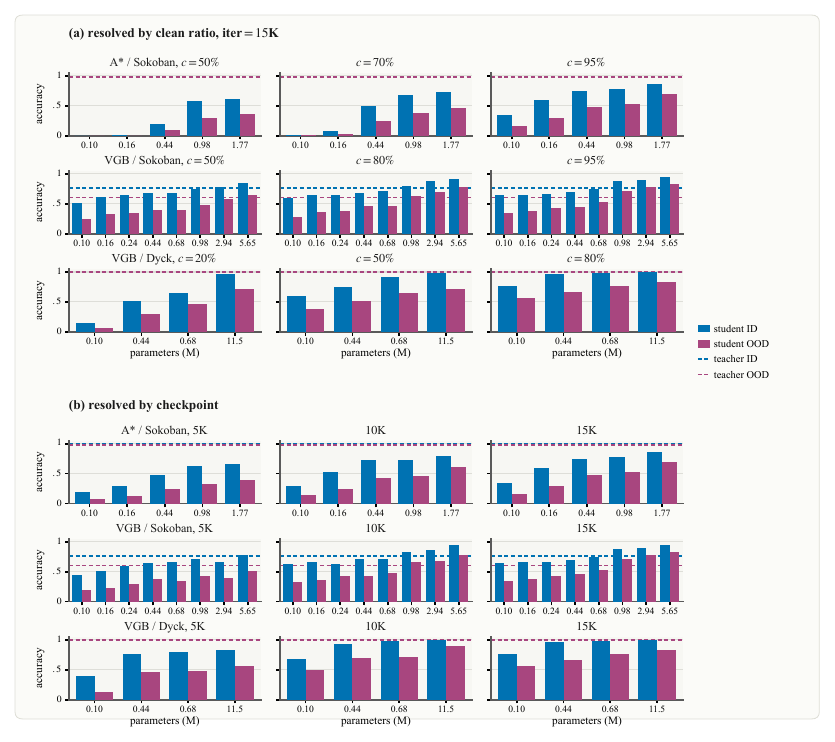}
\setlength{\abovecaptionskip}{4pt}
\small
\caption{\textbf{Accuracy increases with size at every clean ratio and checkpoint.} \emph{The sweeps of Figure~\ref{fig:app-modelsize-best} shown separately for each clean-data ratio and each checkpoint. Within each block, the rows are A*, VGB-Sokoban and VGB-Dyck from top to bottom.}}
\label{fig:app-modelsize-ratio}
\end{figure}

\begin{table}[t]
\centering
\caption{\emph{Full results of simple-BT distillation on Dyck, in one pass and with the student's own \BT{} tokens. Figure~\ref{fig:scale-bt}c in the main text uses the \BT{} rows of the distilled students.}}
\label{tab:app-bt}
\setlength{\tabcolsep}{4.5pt}
\renewcommand{\arraystretch}{1.12}
{\small
\begin{tabular}{@{}lcccccccc@{}}
\toprule
& \multicolumn{2}{c}{reference} & \multicolumn{3}{c}{distilled, trace source}
& \multicolumn{3}{c}{synthetic BT, errors from} \\
\cmidrule(lr){2-3}\cmidrule(lr){4-6}\cmidrule(lr){7-9}
& base & oracle-BT & ID & $\alpha{=}0.25$ & $\alpha{=}0.50$
& OOD & random & ID \\
\midrule
one pass, ID       & $0.947$ & ---     & $0.936$ & $0.957$ & $0.942$ & $0.947$ & $0.977$ & $0.840$ \\
one pass, OOD      & $0.650$ & ---     & $0.133$ & $0.470$ & $0.313$ & $0.250$ & $0.637$ & $0.450$ \\
\cmidrule(lr){1-9}
with \BT{}, ID      & ---     & $0.996$ & $0.966$ & $0.970$ & $0.957$ & $0.957$ & $0.963$ & $0.933$ \\
with \BT{}, OOD     & ---     & $0.873$ & $0.543$ & $0.677$ & $0.680$ & $0.637$ & $0.683$ & $0.697$ \\
\bottomrule
\end{tabular}%
}
\end{table}

\begin{figure}[t]
\centering
\includegraphics[width=\linewidth]{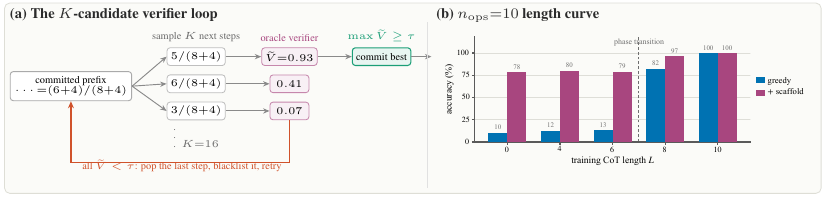}
\setlength{\abovecaptionskip}{4pt}
\small
\caption{\textbf{The same phase transition on an easier task.} \emph{(a) The verifier-backtracking harness with $K{=}16$. (b) Accuracy against the training chain-of-thought length $L$ for $n_{\mathrm{ops}}{=}10$.}}
\label{fig:app-cot}
\end{figure}

\begin{figure}[t]
\centering
\includegraphics[width=\linewidth]{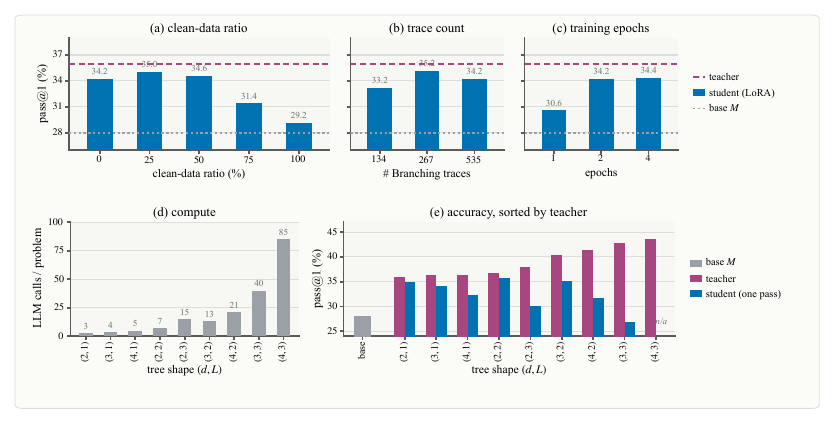}
\setlength{\abovecaptionskip}{4pt}
\small
\caption{\textbf{No distillation setting reaches the teacher, and only the teacher gains from more calls.} \emph{Branching distillation on Qwen3-1.7B. Top row: clean-data ratio (a), number of traces (b) and number of epochs (c); the teacher is dashed and the base model dotted. Bottom row: number of calls per problem for every tree shape $(d,L)$ (d), and accuracy with the shapes sorted by the teacher's accuracy (e).}}
\label{fig:app-branching-ablation}
\label{fig:app-branching-scaling}
\end{figure}

\begin{table}[t]
\centering
\caption{\emph{The five tree shapes we run and the median length of their full traces.}}
\label{tab:qed-shapes}
\setlength{\tabcolsep}{7pt}
\renewcommand{\arraystretch}{1.12}
{\small
\begin{tabular}{@{}lccccc@{}}
\toprule
shape & $(1,4)$ & $(1,5)$ & $(1,6)$ & $(2,2)$ & $(3,3)$ \\
\midrule
leaves        & $4$    & $5$    & $6$    & $4$     & $27$ \\
aggregations  & $1$    & $1$    & $1$    & $2{+}1$ & $13$ \\
calls         & $5$    & $6$    & $7$    & $7$     & $40$ \\
median trace  & $208$k & $253$k & $309$k & ---     & $\approx1.4$M \\
\bottomrule
\end{tabular}%
}
\end{table}

\begin{figure}[t]
\centering
\includegraphics[width=\linewidth]{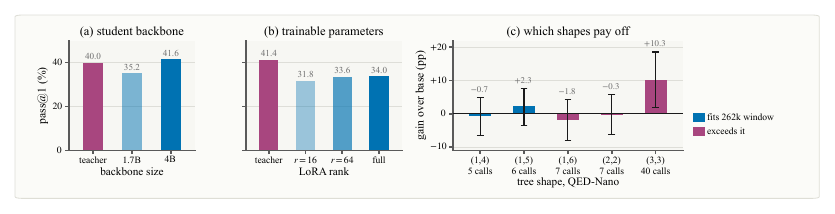}
\setlength{\abovecaptionskip}{4pt}
\small
\caption{\textbf{A larger backbone closes the gap.} \emph{(a) Student accuracy with the $1.7$B and $4$B backbones. (b) Student accuracy with LoRA ranks $16$ and $64$ and with full fine-tuning. (c) Gain of the harness over the base model on QED-Nano for each shape. Only the $40$-call shape has a gain whose interval excludes zero, and its trace is much longer than the context window.}}
\label{fig:app-branching-capacity}
\end{figure}

\noindent\textbf{Agents and harnesses.}
A lot of recent work on agents keeps the language model fixed and improves the harness around it.
Some harnesses let the model use tools and see feedback from an environment \citep{yao2023react},
or give it an interface for working with software \citep{yang2024sweagent}. Others ask the model to check and fix its own answers \citep{madaan2023selfrefine,shinn2023reflexion}, or let several copies of the model debate \citep{du2023debate}. A task can also be split into parts whose answers are then combined \citep{saha2024bsm}. Branching combines sampled solutions in a tree
\citep{bateni2025branching}. Strong results in math have also come from harnesses that let strong models generate and check solutions \citep{huang2025winning,feng2026autonomous,qednano2026}.
It is still not well understood why such simple harnesses help models that are already strong.
One idea is that a harness breaks the task into smaller, meaningful steps \citep{zhang2026mgh}. We
try to give clearer answers, using both experiments and theory. We also ask a different question.
Instead of building a better harness, we ask whether the gains from a harness can be moved into the weights of a student of the same size as the base model. We find that the model's expressive power can
stand in the way.

\noindent\textbf{Expressive power of transformers and chain-of-thought.}
Several papers use circuit complexity to describe what transformers can compute. A transformer
with a fixed number of layers and saturated or log-precision attention can only compute functions
in the class $\TC^0$ \citep{merrill2022saturated,merrill2023parallelism}. Writing out intermediate
reasoning steps lets a transformer compute more \citep{feng2023cot,li2024chain,merrill2024expressive},
and some tasks need long chains-of-thought \citep{amiri2025lower}. Most relevant to our theory,
\citet[Theorem~4]{scaffold_merrill2025depth} show that a chain-of-thought with $O(\log n)$ steps
keeps a transformer inside logspace-uniform $\TC^0$. They also show that a transformer whose depth
grows with the input can decide whether one node of a directed graph can reach another. Our
Lemma~\ref{lem:logcot} uses their argument, which goes through all possible chains-of-thought.
\citet{li2026wordram} study how a long enough chain-of-thought can run standard algorithms, such as
graph search. \citet{scaffold_mirtaheri2025} look at a setting close to our formal example. They
show that one polynomially long chain-of-thought can solve a connectivity task on undirected graphs
that polynomially many short chains, combined by majority vote, cannot solve. Their result assumes a
standard conjecture from circuit complexity. Our theorem puts together the known limit for short
chains-of-thought and the fact that directed reachability is hard. This gives a formal reason why a
student may fail to learn what a harness does. Our experiments study the same question in practice,
by distilling search and aggregation traces.

\noindent\textbf{Verifiers and inference-time search.}
A simple way to use more compute at inference time is to sample several solutions and pick one,
either with a verifier or by majority vote
\citep{cobbe2021verifiers,wang2023selfconsistency,brown2024monkeys}. Process reward models score
every step of a solution, which gives more detailed feedback than a verifier that only checks the
final answer \citep{lightman2024verify}. When test-time compute is used well, a small model can
sometimes beat a larger one \citep{snell2024testtime}. Other methods search over partial solutions. They use tree search guided by the model \citep{yao2023tot,hao2023rap}, classical planning such as A* search \citep{hart1968astar}, or backtracking guided by an imperfect verifier \citep{rohatgi2025vgb}. Another line of work teaches the model to do the search on its
own. These methods train on written-out search traces \citep{gandhi2024sos,lehnert2024searchformer},
add a backtracking action \citep{cundy2024sequencematch}, or distill the output of a harness into a single pass \citep{yu2024system2}. Our experiments also train on such traces. We find that a student of the same size as the base model does not reach its teacher. The gap gets smaller as the student becomes more expressive relative to the task. The work most similar to ours is
\citet{tsilivis2026internalization}, who study how a transformer can absorb an explicit procedure
into its weights. They have the model absorb its own chain-of-thought tokens, while we have it
absorb a harness that calls the model many times and keeps state between calls. They also prove
that gradient descent, with a curriculum that slowly removes the chain-of-thought, learns to compute
sparse parities in a single forward pass. We instead show a limit on what a same-size student can represent.

\noindent\textbf{Distillation.}
We conclude that a harness cannot be fully moved into the weights. This depends partly on one assumption: in our setting, what limits \emph{distillation} \citep{hinton2015distilling} is mainly the student's capacity. Many forms of distillation have been studied
\citep{agarwal2024policy,lu2025onpolicydistillation}. We use the simplest one, which is supervised
fine-tuning on the teacher's outputs. When capacity is not a constraint, this method is provably
optimal \citep{foster2024behavior}. Other papers show where it breaks down when capacity is limited
\citep{ross2011reduction,rohatgi2025computational,sriraman2026behavior,zhang2026does}. Together,
these results suggest that distilling a harness with SFT is a reasonable way to measure how much the
student's capacity holds it back.

\section{Supplementary Details for Section~\ref{sec:cannot-recover}}
\label{app:sec2}

In this appendix we describe the two harnesses and their traces, how we train the base models and the students, and all the ablations for Section~\ref{sec:cannot-recover}.

\subsection{Harness Details}
\label{app:scaffold-details}

We first describe the two harnesses from Section~\ref{sec:cannot-recover} and how we write their
traces for the student.

\subsubsection{VGB (probabilistic backtracking)}
\label{app:vgb}

At a prefix $s$, the \modelc{} proposes an action $a\sim M(\cdot\mid s)$, and the harness then either commits $a$ or emits a backtrack token \BT{}:
\[
    \textstyle
    p_{\text{retract}}
    \;=\;\max\!\Big(\varepsilon,\; 1-\tfrac{\widetilde V(s,a)}{\widetilde V(s)}\Big),
    \qquad
    \begin{cases}
    \text{commit } a & \text{w.p. } 1-p_{\text{retract}},\\[2pt]
    \BT{} & \text{w.p. } p_{\text{retract}}.
    \end{cases}
\]

\noindent\textbf{The score $\widetilde V$.}
Following Eq.~(3) of \citet{rohatgi2025vgb}, $\widetilde V(s)$ is the probability that the
\modelc{}, starting from $s$, finishes with a valid sequence:
\[
    \textstyle
    \widetilde V(s)\;:=\;\mathbb{E}_{\tau\sim M(\cdot\mid s)}\big[\mathbf 1[\text{valid}(s\circ\tau)]\big]\in[0,1],
    \qquad
    \widetilde V(s,a)\;:=\;\widetilde V(s\circ a).
\]
We estimate this probability in two ways:
\begin{itemize}
    \item \emph{MC oracle.} We sample $K$ completions $\tau_1,\dots,\tau_K$ of $s$ from the
    \modelc{} and take the fraction that are valid,
    $\widehat V_K(s)=\tfrac1K\sum_{k=1}^K \mathbf 1[\text{valid}(s\circ\tau_k)]$. We use $K{=}1$
    when generating traces and $K{=}1000$ for evaluation.
    \item \emph{Learned $\widehat V$.} We train a small classifier that reads a prefix $s$ and
    predicts $\widetilde V(s)$. To get a training label for $s$, we let the \modelc{} complete $s$
    once and record whether the result is valid. The classifier is trained on these labels with
    binary cross-entropy, as in Eq.~(4) of \citet{rohatgi2025vgb}. At inference time, we call the
    classifier once instead of sampling $K$ completions.
\end{itemize}

\noindent\textbf{Trace format.}
A trace starts with the board, then lists the steps of the search, and ends with \texttt{<EOS>}.
Each step takes two tokens. The first is the verifier score at that step, rounded to one of $10$
levels and written as a token \texttt{<V$_q$>}. The second is the harness's decision: the action it
commits, or \BT{}. Because the score comes right before each decision, the student can see what
made the harness backtrack. In total the vocabulary has $27$ tokens:
\[
    \textstyle
    \underbrace{\texttt{<board>}}_{\text{prompt}}\;\Big|\;
    \underbrace{\texttt{<V$_q$>}\;\texttt{<action\_or\_BT>}\;\cdots\;\texttt{<V$_q$>}\;\texttt{<action\_or\_BT>}}_{\text{search log (2 tokens / step)}}
    \;\Big|\;\underbrace{\texttt{<EOS>}}_{\text{end}}.
\]
In the ablations we compare four versions of these traces. They differ in two ways: the harness
scores prefixes with either the learned $\widehat V$ or the MC oracle, and the score tokens are
either kept in the trace or removed. Without score tokens, the student sees only the actions and the
backtracks, and has to work out by itself when to backtrack.

\subsubsection{A* search (LM-pruned)}
\label{app:astar}

The A* harness is ordinary A* search, with the \modelc{} used to prune it. Like any A* search, it
keeps a queue of board states ordered by $f=g+h$, where $g$ counts the moves made so far and $h$ is
the Manhattan-distance heuristic. At each step it takes the state with the smallest $f$ out of the
queue. Plain A* search would then put every state that is one legal move away into the queue. Our
harness puts in only two of them: the states reached by the two moves that the \modelc{} is most
likely to make there. With this pruning the harness still finds optimal plans, and its traces are
about $36\%$ shorter than those of plain A* search.

\noindent\textbf{Trace format.}
The trace lists every state that the search creates (\texttt{<CREATE>}, followed by the state's
coordinates and its values of $g$, $h$ and $f$) or closes (\texttt{<CLOSE>}, followed by its
coordinates), and it ends with the plan after \texttt{<PLAN>}. With these markers and the number
tokens, the vocabulary has $45$ tokens:
\[
    \textstyle
    \small
    \underbrace{\texttt{<board>}}_{\text{prompt}}\;\Big|\;
    \underbrace{\big(\texttt{<CREATE>}\,pr\,pc\,br\,bc\,g\,h\,f \;\mid\; \texttt{<CLOSE>}\,pr\,pc\,br\,bc\big)^{*}}_{\text{search log}}
    \;\Big|\;\underbrace{\texttt{<PLAN>}\,a_1 a_2\cdots\texttt{<EOS>}}_{\text{final plan}}.
\]
At test time the student writes the whole trace by itself with greedy decoding, without calling
the \modelc{}, and we check whether the plan after \texttt{<PLAN>} solves the puzzle.

\subsection{Experimental Details}
\label{app:exp-details}

\noindent\textbf{Architecture.}
Unless we say otherwise, the base model, the model inside the teacher and the same-size student are
the same small Transformer, with $2$ layers, width $128$ and $4$ attention heads (about $0.68$M
parameters). Table~\ref{tab:scales} lists the other sizes we use when we vary the size of the
student.

\noindent\textbf{Dyck.}
Each sequence is a string of $64$ brackets of four kinds, and it is valid if every opening bracket
is closed by the matching bracket in the right order. The ID and OOD sequences follow the same
rules but use the four kinds of bracket with different frequencies: $0.1$, $0.2$, $0.3$ and $0.4$
in distribution, and the reverse out of distribution. We train the base model on $300$K valid ID
sequences, using AdamW with learning rate $3\times10^{-4}$ for $3000$ iterations. When the harness
backtracks on Dyck, it deletes the last $4$ tokens and lets the base model write them again, and it
backtracks at most $50$ times. Two verifiers can tell it when to backtrack. The learned one is a
linear classifier that reads the hidden state of the base model and predicts whether the last token
is a mistake; it is correct on $87.3\%$ of tokens. The oracle checks the prefix directly, so it is
always correct.

\noindent\textbf{Sokoban.}
Each puzzle is a $5\times5$ grid with one player, one box, one target and at most three walls, and
the goal is to push the box onto the target. The player moves up, down, left or right, and \BT{}
undoes the last move. The model reads a puzzle as its $25$ cells, row by row, and then writes the
moves. Solutions are between $3$ and $15$ moves long, and the two splits (ID and OOD) differ in how long they
tend to be: short solutions are the most common in distribution, and long ones are the most common
out of distribution. The base model is trained in the same way as on Dyck, and we test on $300$
held-out puzzles from each split. The oracle verifier runs a breadth-first search to check whether
the puzzle can still be solved from the current state.

\noindent\textbf{Students.}
A student learns from the teacher's traces in one of two ways. Either we train it from scratch on
the traces alone, or we start from the base model and fine-tune it on a mix of traces and clean
data. Unless we say otherwise, the mix is $70\%$ clean data and fine-tuning runs for $5$K
iterations. At test time the student is on its own, with no harness and no verifier.

\subsection{Full Ablation Grids}
\label{app:cannot-recover-ablation}

This section collects the full sweeps behind the ablations in
Section~\ref{sec:cannot-recover-ablation}. We ran every sweep on both tasks and with all four
versions of the VGB traces, and the figures show a few representative cases. In every panel, the
dashed lines mark the teacher.

\subsubsection{Fine-tuning the Base Model}
\label{app:ft-grids}

Here the student starts from the \modelc{} and is fine-tuned on a mix of traces and clean data. A
default run uses all of the traces, adds clean data until it makes up $70\%$ of the mix, and trains
for $5$K iterations. In each sweep we change just one of these three choices and leave the other two
as they are. We try clean-data shares from $0\%$ to $100\%$, training runs of $3$K, $5$K or $8$K
iterations, and $1$K, $5$K or $20$K traces. We also change which puzzles the traces come from: instead of ID puzzles only, we
draw them from the mix $P_\alpha=(1-\alpha)P_{\text{ID}}+\alpha P_{\text{OOD}}$ for
$\alpha\in\{0.25,0.5,0.75\}$, and repeat the clean-data sweep for each $\alpha$.
Figure~\ref{fig:vgb-ft-ablation} shows all four sweeps for VGB, and Figure~\ref{fig:app-astar-ft}
shows the first three for A* search.

The first question is whether the student simply needs the right amount of clean data. Clean data
does help, but only up to a point. On VGB, accuracy climbs quickly as the first $30\%$ of clean data
is added, and after that it hardly moves, staying at about the level of the \modelc{}
(Figure~\ref{fig:vgb-ft-ablation}a). In other words, clean data repairs the damage that training on
traces alone does to the student, but it does not lift the student above the \modelc{}. The A*
traces are harder to learn, and without any clean data the A* student fails almost completely
(Figure~\ref{fig:app-astar-ft}a).

Training longer or on more traces does not help much either. Going from $3$K to $8$K iterations adds
only a few points, and twenty times more traces barely changes the student
(Figure~\ref{fig:vgb-ft-ablation}b,c), so the gap is unlikely to come from too little training or
too little data.

Finally, the gap could come from which puzzles the traces are drawn from. Mixing OOD puzzles into
training raises OOD accuracy a little, as one would expect, but the student ends up about as far
below the teacher as before (Figure~\ref{fig:vgb-ft-ablation}d--f).

\subsubsection{Training from Scratch}
\label{app:fs-grids}

We run a similar set of checks for students trained from scratch at the size of the \modelc{}. We
train with four different random seeds, to see how much the results vary. We train on traces from
OOD puzzles, which tells us whether the student learns the search itself or only the statistics of
the ID puzzles. And we draw the training puzzles from the same mix $P_\alpha$ as above.
The results are shown in Figure~\ref{fig:app-fs}.

\section{Supplementary Details for Section~\ref{sec:closing-gap}}
\label{app:validation}

This appendix gives more details on the three experiments of Section~\ref{sec:closing-gap}.

\subsection{Model-Size Sweep (Section~\ref{sec:gap-size})}
\label{app:validation-size}

\noindent\textbf{Setup.}
In this sweep we change the size of the student and keep everything else fixed. Every point is a
new student, trained from scratch on the same traces as in Section~\ref{sec:cannot-recover} and
tested on $200$ ID and $200$ OOD puzzles. Table~\ref{tab:scales} lists the $12$ sizes, which range
from $0.10$M to $5.65$M parameters.

\noindent\textbf{Results.}
The main text shows VGB on Dyck (Figure~\ref{fig:scale-bt}b), and
Figure~\ref{fig:app-modelsize-best} shows the two Sokoban settings. On Sokoban, too, larger students
do better, but how far they get depends on the harness. With A* search, the gap to the teacher
shrinks steadily as the student grows, from $0.65$ to $0.13$, yet no student reaches the teacher.
With VGB, the largest students go past the teacher, both ID and OOD.

\noindent\textbf{Each setting on its own.}
The curves above take the best run at each size over all clean-data ratios and checkpoints.
Figure~\ref{fig:app-modelsize-ratio} shows
each clean-data ratio and each checkpoint separately. Accuracy still goes up with size in every
panel, both ID and OOD.

\subsection{Simple-BT Distillation (Section~\ref{sec:gap-easy})}
\label{app:validation-easy}

\noindent\textbf{Setup.}
Here the teacher uses a much simpler harness than VGB or A* search. It runs the \modelc{} on Dyck,
and whenever a verifier flags the last token as a mistake, it deletes the last $4$ tokens and lets
the \modelc{} write them again. The verifier is either the learned linear classifier or the oracle
from Appendix~\ref{app:exp-details}. The student starts from the \modelc{} and is fine-tuned on a
mix of clean data and teacher traces, where we keep only the traces with exactly one backtrack. We
also vary where the traces come from, by drawing the sequences from a mix $P_\alpha$ of the ID and
OOD distributions.

\noindent\textbf{Results.}
Table~\ref{tab:app-bt} lists the accuracy of every student in two modes. In the first (one pass),
the student writes the whole sequence from left to right with top-$p$ sampling ($p{=}0.9$), without
any backtracking. In the second, it samples in the same way, but whenever it writes a \BT{} token,
the last $4$ tokens are deleted and it continues from there, up to $50$ times. For comparison, the table also includes students
fine-tuned on synthetic backtracking data, in which we insert errors taken from OOD data, chosen at
random, or taken from ID data. In distribution there is little to gain, since the distilled students
already match the \modelc{} in one pass. The interesting case is OOD. There, letting the student use
its own \BT{} tokens and drawing the traces from a slightly OOD mix ($\alpha{=}0.25$) together lift
it a little above the \modelc{}. It still stays well below the teacher that uses the oracle
verifier.

\subsection{Long-CoT on Modular Arithmetic (Section~\ref{sec:gap-cot})}
\label{app:validation-cot}

\noindent\textbf{Task.}
We follow the setup of \citet[Alg.~1]{feng2023cot}. Each problem is an arithmetic expression modulo
$11$ with $n_{\text{ops}}$ operators, and it is solved one step at a time. Each step simplifies the
leftmost part of the expression that can be simplified, so every problem has exactly one chain of
steps. To control how much chain-of-thought the model sees, we train a $3$-layer Transformer (about
$2.5$M parameters) on the problem, then only the first $L$ steps of its chain, then the answer, and
we vary $L$ from $0$ to $n_{\text{ops}}$. We test on $1000$ held-out problems.

\noindent\textbf{Harness.}
The harness writes the solution one step at a time, where a step is one simplification of the
expression. At each step, the model proposes $16$ candidates for the next step. An oracle verifier
gives each candidate a score of $1$ if the chain so far, with this candidate added, agrees with the
true chain, and $0$ otherwise. If some candidate passes, the harness adds it to the chain and moves
on. If none passes, the harness backtracks: it removes the last step it added and tries that
position again, this time without allowing the removed step, so that it does not make the same
mistake twice.
Figure~\ref{fig:app-cot}a illustrates this harness. Figure~\ref{fig:app-cot}b shows the results on
an easier version of the task, with $n_{\text{ops}}{=}10$; the main text (Figure~\ref{fig:cotlen})
shows $n_{\text{ops}}{=}20$.

\section{Supplementary Details for Section~\ref{sec:theory-gap}}
\label{app:proofs}

This appendix proves Lemma~\ref{lem:logcot} and Theorem~\ref{thm:main}.
Throughout, circuit uniformity means logspace uniformity. The input length is
$n=\Theta(N^2)$, $\Sigma$ is fixed and finite, and
$T_c(n)=\lceil c\log_2(n+1)\rceil$ for a fixed integer $c\ge1$.

\subsection{The local query and the hardness of reachability}
\label{app:reachability}

For $A\in\{0,1\}^{N\times N}$ and $u,v\in[N]$, the local predicate is
\[
  \textstyle
  E(A,u,v)=\bigvee_{i,j\in[N]}\bigl(A_{ij}\land[u=i]\land[v=j]\bigr).
\]
Writing each equality as a conjunction of binary literals gives a constant-depth
Boolean circuit with polynomially many gates and wires. A logarithmic-space procedure
can generate the gates and their wiring by enumerating $(i,j)$, so $E\in\AC^0$.
The notation $E_A(u,v)$ fixes the graph and makes the two queried vertices explicit.

Directed $\STCON$ is $\NL$-complete under uniform projection translations, and
hence under uniform $\AC^0$ many-one reductions
\citep[Corollary~4.7]{scaffold_immerman1987}. The standard configuration-graph
construction explains the strength of the reduction. For a fixed nondeterministic
logspace machine and input $x$, its polynomially many configurations become vertices;
an edge records one legal machine transition. After fixing the two configurations,
the legality of that transition depends on at most one input bit. In a standard
normal form, each output adjacency bit is therefore a constant, an input bit, or its
negation, and its wiring is computable in logspace. The source and target encode the
initial configuration and a designated accepting configuration.

This reduction strength is what Theorem~\ref{thm:main} uses. An arbitrary logspace
reduction could itself perform work outside $\TC^0$, whereas a uniform $\AC^0$
reduction can be composed with a $\TC^0$ decision circuit while preserving constant
depth, polynomial size, and logspace uniformity.

\subsection{Proof of Lemma~\ref{lem:logcot}}
\label{app:logcot}

\begin{proof}
Let $L\in\CoT_{T_c(n)}(\Sigma,\TC^0)$, with predicates $V_{n,i}$ and
$B_n$ as in Definition~\ref{def:cot}. Let $d$ and $p(n)$ be their common depth
and polynomial size bounds. Since the alphabet and $c$ are fixed,
\[
  \textstyle
  M(n):=|\Sigma|^{T_c(n)}
  \le |\Sigma|(n+1)^{c\log_2|\Sigma|}=\poly(n).
\]
Enumerate all transcripts as $z^{(1)},\ldots,z^{(M(n))}$. For each transcript,
form the circuit
\[
  \textstyle
  \mathrm{Acc}_j(x)=B_n\bigl(x,z^{(j)}\bigr)
  \land\bigwedge_{i=1}^{T_c(n)}
  V_{n,i}\bigl(x,z^{(j)}_{<i},z^{(j)}_i\bigr).
\]
Each predicate is evaluated after hardwiring its transcript inputs. This does not
increase its depth or size. Add one AND gate for each transcript and one final OR:
\[
  \textstyle
  C_n(x)=\bigvee_{j=1}^{M(n)}\mathrm{Acc}_j(x).
\]
The resulting circuit has depth at most $d+2$ and size at most
$M(n)(T_c(n)+1)p(n)+O(M(n))$, which is polynomial in $n$.

For uniformity, the integer $c$ is fixed and $T_c(n)$ is logspace-computable.
Both a transcript and its index use $O(\log n)$ bits, so a logarithmic-space
generator can enumerate the transcripts, produce their hardwired bits, and invoke
the jointly uniform generators for $V_{n,i}$ and $B_n$. It can then add the AND
and OR wiring using counters of $O(\log n)$ bits. Thus $(C_n)$ is logspace-uniform.
By Definition~\ref{def:cot}, $C_n(x)=1$ exactly when $x\in L$, establishing
$L\in\TC^0$.
\end{proof}

This is the transcript-enumeration argument used for logarithmic-length transformer
CoT by \citet[Theorem~4 and Appendix~E]{scaffold_merrill2025depth}. Joint uniformity
and the explicit computable ceiling ensure that the construction also satisfies
the uniformity convention of this paper. The argument uses the polynomial number
of possible transcripts; it does not establish a sharp threshold for the power of
longer chains of thought.

\subsection{Proof of Theorem~\ref{thm:main}}
\label{app:separation}

\begin{proof}
\emph{Part (i).} The preceding construction shows $E\in\AC^0$. Run breadth-first
search using $E_A$ to test existence of edges. Store a visited set $R$, a current frontier $F$,
and a next frontier $F'$ as $N$-bit characteristic vectors. Initially,
$R=F=\{s\}$. At each iteration, set
\[
  \textstyle
  F'=\{v\in[N]\setminus R:\ E_A(u,v)=1\text{ for some }u\in F\},
\]
by querying the pairs in $F\times([N]\setminus R)$, and update
$R\leftarrow R\cup F'$ and $F\leftarrow F'$. Stop when $F$ is empty, and
accept exactly when $t\in R$.

After $k$ iterations, $R$ contains exactly the vertices at directed distance at
most $k$ from $s$, and $F$ contains those first reached at that distance. This
invariant proves correctness, including the case $s=t$. A vertex enters the
frontier at most once, so each ordered pair is queried at most once, giving at
most $N^2$ oracle calls. The three characteristic vectors and the loop counters
use $3N+O(\log N)=O(N)$ bits. The loop has at most $N$ iterations and all its
bookkeeping takes polynomial time.

\emph{Part (ii).} Suppose, for some fixed $\Sigma$ and integer $c\ge1$, that
$\STCON\in\CoT_{T_c(n)}(\Sigma,\TC^0)$. Lemma~\ref{lem:logcot} then
gives $\STCON\in\TC^0$. For any $L\in\NL$, let $f_L$ be the uniform $\AC^0$
many-one reduction from $L$ to $\STCON$ supplied by projection completeness
\citep{scaffold_immerman1987}. Composing a $\TC^0$ circuit for $\STCON$ with the
circuits for the output bits of $f_L$ gives a circuit for $L$. The composition
preserves constant depth, polynomial size, and logspace uniformity. Hence
$\NL\subseteq\TC^0$, contradicting the assumption.
\end{proof}

\section{Supplementary Details for Section~\ref{sec:agentic}}
\label{app:agentic}

This appendix gives more details on the experiments in Section~\ref{sec:agentic}. We first describe more distillation experiments on Qwen3-1.7B (Appendix~\ref{app:agentic-branching}), and then the experiment on the QED-Nano RL checkpoint.

\subsection{Branching distillation on Qwen3-1.7B}
\label{app:agentic-branching}

\noindent\textbf{Distillation settings.}
The top row of Figure~\ref{fig:app-branching-ablation} checks whether a better distillation recipe
would close the gap, using the $\mathbf{ks}{=}[3]$ tree. We vary three things. The first is how much
clean data we mix into the Branching traces, where clean data means in-distribution solutions that
the \modelc{} writes itself. The second is the number of traces. The third is the length of training, and the student stops
improving after two epochs. None of these settings brings the student up to the teacher.

\noindent\textbf{All tree shapes.}
A tree with $d$ children per node and depth $L$ calls the model $\sum_{i=0}^{L}d^{i}$ times, so the
shapes we try range from $3$ calls for $(2,1)$ to $85$ for $(4,3)$. The bottom row of
Figure~\ref{fig:app-branching-scaling} shows the number of calls for each shape (d) and the accuracy
of the teacher and the student (e; the main text shows the same panel). As the teacher makes more
calls it gets better almost every time, while the student shows no clear trend.

\noindent\textbf{More capacity.}
If the student falls short because it is too small, giving it more capacity should help. We try two
ways of doing this (Figure~\ref{fig:app-branching-capacity}). Training more parameters of the same
model helps only a little: with a higher LoRA rank, or even with full fine-tuning, the student stays
well below the teacher (panel b). Switching from the $1.7$B to the $4$B model, on the other hand,
takes the student past the teacher (panel a). This suggests that the student needs a larger model,
and that training more parameters of the same model is not enough.

\subsection{The QED-Nano checkpoint and its harness}
\label{app:qed}

QED-Nano \citep{qednano2026} is a $4$B model for writing Olympiad-level proofs. It starts from
Qwen3-4B-Thinking and is trained in two stages: first supervised fine-tuning on proofs
written by the much larger DeepSeek-Math-V2, then reinforcement learning on the
\texttt{FineProofs-RL} problems, with rewards computed from grading rubrics. The harness is only
added at test time and is not trained. We use the released checkpoint, which is the model after both
stages.

The QED-Nano paper reports that its harness, Recursive Self-Aggregation (RSA), helps a lot: on
IMO-ProofBench it raises accuracy from $40.0\%$ to $56.9\%$, at the cost of about $22$ times more
tokens. RSA works with a pool of $n$ proofs. The model first writes $n$ proofs independently. Then,
for $T$ rounds, it replaces the pool with new proofs, each written by merging $k$ proofs drawn at
random from the current pool. Finally, the proofs are paired up, the
model reads each pair and keeps the better proof, and this repeats until only one proof is left.
That proof is the final answer. The same model plays every role, with a
different prompt for each.

This harness is far too large to distill. With $39$ calls, one RSA trace is about $1.5$M tokens
long, several times longer than any context we could train on. We therefore test the simpler
Branching harness, which writes a few independent proofs and merges them in a few steps. Its traces
are short enough that distilling them is at least possible, and we ask whether the harness still
helps at these smaller sizes.

\subsection{Corpus}

We test on $300$ problems from \texttt{FineProofs-RL}, the $5{,}227$ problems used in the model's
reinforcement-learning stage. Using this set has two advantages. The QED-Nano authors already
removed the problems that overlap with IMO-ProofBench, the benchmark the harness was tuned on, and
every problem comes with a grading rubric, which we need to score a free-form proof.

We only keep problems of medium difficulty, namely those on which the \modelc{}'s average reward
over $128$ attempts lies between $0.10$ and $0.60$. On harder problems the model
almost never writes a correct proof, so the harness has mostly wrong proofs to merge and little
chance to improve on them. On easier problems a single call is usually already correct, so there
is little left for the harness to add. From this range we take $300$
problems evenly spaced by difficulty, so that every run uses the same problems.

\subsection{Shapes}

Table~\ref{tab:qed-shapes} lists the five tree shapes we run. We describe a shape by a pair $(r,b)$.
The model first writes $b^r$ proofs; then, in each of $r$ rounds, it merges every group of $b$
proofs into one, until a single proof is left. We turn off the final pairwise comparison, so a trace
contains only the tree. The shapes $(1,6)$ and
$(2,2)$ cost the same, $7$ calls each, but merge the proofs in different ways. With $(1,6)$, the
model writes $6$ proofs and merges all of them at once. With $(2,2)$, it writes $4$ proofs, merges
them in pairs into $2$ proofs, and then merges those $2$ into one. Comparing the two therefore shows
whether the shape of the tree matters when the cost is held fixed.

\subsection{Generation and grading}

\noindent\textbf{Generation.}
We serve the model with vLLM, allow up to $65{,}536$ tokens per call, and sample once per problem at
temperature $1.0$. We use the prompts from the QED-Nano repository without changes.

\smallskip
\noindent\textbf{Base and harness answers.}
Both answers come from the same run. The base answer is the output of the first call, which is just
one ordinary attempt at the problem, so the baseline costs nothing extra. The harness answer is the
output at the root of the tree. Because the two answers share the problem, the run and the sampling
settings, we can compare them problem by problem. In both cases we grade only the final answer and
not the model's thinking.

\smallskip
\noindent\textbf{Grading.}
A local \texttt{gpt-oss-20b} judge scores each answer out of $7$ points following the problem's
rubric, and we count an answer as correct if it scores at least $6$. All the percentages we report
are these pass rates. Before grading we cut answers off at $100{,}000$ characters, because a few
broken answers are extremely long and would otherwise make the grading call fail. For the gain
of the harness over the base answer, we give a $95\%$ confidence interval from a paired bootstrap.

\subsection{Results and caveats}

The results are in Figure~\ref{fig:agentic-scaling}b and Figure~\ref{fig:app-branching-capacity}c.
For the four small trees, which make $5$ to $7$ calls, the harness is not clearly better than a
single call: the confidence interval of the gain contains zero. Only the largest tree, $(3,3)$ with
$40$ calls, does clearly better than a single call. But its trace is about $1.4$M tokens long, far
more than the $262$K-token context window, so a student could not be trained on it. Two caveats apply.

\smallskip
\noindent\emph{(i) We do not filter by trace length.} How long a trace gets depends on the harness
itself: when a merge runs into the token limit, the trace becomes too long and the answer also
scores zero. If we dropped the problems with long traces, we would mostly be removing
problems on which the harness failed, and the harness would look better than it really is. We
therefore keep every problem.

\smallskip
\noindent\emph{(ii) We use a single run.} Each problem is run once at temperature $1.0$ and graded
once.

\makeatletter\loop\unless\ifx\@deferlist\@empty\null\newpage\repeat\makeatother
\end{document}

%% file: math_commands.tex
\usepackage{amsmath,amsfonts,bm}

\def\eqref#1{equation~\ref{#1}}

\def\1{\bm{1}}

\DeclareMathAlphabet{\mathsfit}{\encodingdefault}{\sfdefault}{m}{sl}
\SetMathAlphabet{\mathsfit}{bold}{\encodingdefault}{\sfdefault}{bx}{n}